\PassOptionsToPackage{numbers, compress}{natbib}
\documentclass{article}

\usepackage[preprint]{ewrl_2026}
\usepackage{natbib}

\usepackage{amsmath, amssymb, amsthm}
\usepackage{algorithm}
\usepackage{algcompatible}
\usepackage{graphicx}
\usepackage{tabularx}
\usepackage{subcaption}
\usepackage{xcolor}         

\definecolor{rljgreen}{RGB}{0, 101, 0}
\newcommand{\algoname}{\text{CESA-LinUCB}}
\newcommand{\ESA}{\textcolor{rljgreen}{\textbf{\texttt{\algoname}}}}

\theoremstyle{plain}
\newtheorem{theorem}{Theorem}
\newtheorem{lemma}[theorem]{Lemma}
\newtheorem{proposition}[theorem]{Proposition}
\newtheorem{corollary}[theorem]{Corollary}

\theoremstyle{definition}
\newtheorem{definition}{Definition}
\newtheorem{assumption}{Assumption}

\usepackage[utf8]{inputenc} 
\usepackage[T1]{fontenc}    
\usepackage{hyperref}       
\usepackage{url}            
\usepackage{booktabs}       
\usepackage{amsfonts}       
\usepackage{nicefrac}       
\usepackage{microtype}      

\title{Learning When to Trust in Contextual Social Bandits}

\author{%
  Majid Ghasemi, and Mark Crowley \\
  Department of Electrical \& Computer Engineering\\
  University of Waterloo, Canada\\
  \texttt{\{majid.ghasemi, mark.crowley\}}@uwaterloo.ca \\
}

\begin{document}

\maketitle

\begin{abstract}
Robust reinforcement learning typically assumes that feedback sources are either globally trustworthy or corrupted within a fixed global budget. We identify a more subtle failure mode that escapes this dichotomy, which we call \emph{Contextual Sycophancy}. In this failure, evaluators are truthful in benign contexts but systematically biased in critical ones, so that no single evaluator is reliable everywhere and the corrupt evaluators may form a \emph{majority} in the contexts that matter. Our first result is an information-theoretic lower bound. We exhibit two problem instances that induce \emph{identical} social-feedback distributions yet have disjoint optimal actions, proving that \emph{any} algorithm relying on social feedback alone (including any robust aggregator, regardless of breakdown point) incurs $\Omega(T)$ latent regret. This shows that breaking contextual sycophancy is impossible without having some information. We then show that a sparse stream of ground-truth audits, available with probability $p_{\mathrm{aud}}$, is sufficient. We propose \ESA, which learns a per-evaluator contextual \emph{trust boundary} from audits and re-weights feedback accordingly, and we prove a high-probability latent-regret bound of $\tilde{\mathcal{O}}\!\big(\sqrt{T\,d_{VC}/p_{\mathrm{aud}}} + d\sqrt{T} + \epsilon_{\mathrm{tol}}T\big)$, where $d_{VC}$ is the complexity of the adversary's bias strategy. The audit-dependence $1/\sqrt{p_{\mathrm{aud}}}$ matches the information-theoretic necessity of audits. Empirically, \ESA\ recovers the ground truth when $80\%$ of the social layer is adversarial, a regime in which median- and mean-based robust baselines fail.
\end{abstract}

\section{Introduction}
\label{sec:intro}

The alignment of AI systems has largely transitioned from a problem of \emph{specification} (defining explicit reward functions) to one of \emph{adjudication}, in which agents learn from aggregated human feedback \citep{lindstrom2024ai}. This shift was driven by the difficulty of specifying complex goals without inducing reward hacking \citep{amodei2016concrete}, and Reinforcement Learning from Human Feedback (RLHF) \citep{christiano2017deep, ouyang2022training} has become the de facto standard for steering large language models \citep{kaufmann2024survey}.

The paradigm though rests on a fragile assumption: that the feedback layer is, on average, reliable. Empirical work challenges this, documenting annotator disagreement, dependence on subjective preference, and inconsistent thresholds for ``helpfulness'' and ``harmfulness'' \citep{yeh2024reliable}. More pointedly, evaluators exhibit \emph{sycophancy}, which is a tendency to agree with the user's stated beliefs rather than report objective truth \citep{sharma2023sycophancy, perez2023discovering, ghasemi2026objective}, which corrupts the feedback distribution in a structured, belief-correlated way. When evaluators are systematically biased, standard aggregation fails to recover the latent ground truth \citep{lazier2023fairness}. Robust RL has developed tools for adversarial corruption \citep{lykouris2018stochastic, bogunovic2021stochastic}, but these adopt a \emph{global} corruption model. In this model, evaluators are universally good, universally bad, or corrupt within a fixed global budget (call it $C$).

We argue that real-world bias is \emph{contextual}, and that the global model is therefore the wrong abstraction. An evaluator may answer objective questions (e.g., arithmetic, context A) honestly while strategically agreeing with a user's prior on contested topics (e.g., politics, context B). We call this the ``Jekyll \& Hyde''\footnote{\url{https://en.wikipedia.org/wiki/Strange_Case_of_Dr_Jekyll_and_Mr_Hyde}} problem. Under such contextual corruption, an agent that estimates a single global trust score per source may suffer what we term \emph{Contextual Objective Decoupling (COD)}. In COD, agent's learned policy decouples from the ground truth precisely in the regions where the adversary dominates, while appearing well-calibrated on average.

A natural hope is that robust aggregation (taking a median or mean across evaluators) rescues the agent. Our first contribution shows this hope is unfounded \emph{for fundamental, not merely algorithmic, reasons}. When the corrupt evaluators form a contextual majority, the observable feedback distribution is \emph{indistinguishable} from an instance in which the honest minority and corrupt majority swap roles, yet the two instances disagree on the optimal action. No statistic of the social feedback can separate them. Robustness to corruption below a breakdown point does not help, because the corruption here exceeds any such point in the contexts that matter.

This impossibility result reframes the design question. The agent cannot ask only ``\emph{who} do I trust?''; it must acquire side information about ground truth and learn ``\emph{when} do I trust each source?''. We formalize the side information as sparse \emph{verification audits}: with small probability $p_{\mathrm{aud}}$ per round, the agent obtains a ground-truth probe (inspired by the scalable-oversight viewpoint of Constitutional AI \citep{bai2022constitutional}). We then propose \ESA\ (\textbf{\underline C}ontextual \textbf{\underline E}pistemic \textbf{\underline S}ource \textbf{\underline A}lignment--\textbf{\underline L\underline i\underline n}ear \textbf{\underline U}pper \textbf{\underline C}onfidence \textbf{\underline B}ound), which learns a high-dimensional trust boundary per evaluator from these audits and re-weights feedback within a confidence-ellipsoid estimator.

\textbf{Contributions.}
\begin{enumerate}
    \item \textbf{An impossibility result for social feedback (Section~\ref{sec:problem}).} We formalize the \emph{Contextual Social Bandit} and prove the \emph{Information-Theoretic Decoupling Theorem} (Theorem~\ref{thm:decoupling}). Two instances with identical social-feedback laws but disjoint optimal actions force $\Omega(T)$ latent regret on \emph{any} audit-free algorithm, robust aggregators included. This isolates audits as a \emph{necessary} resource, not a design convenience.
    \item \textbf{A semi-supervised algorithm (Section~\ref{sec:methodology}).} We cast contextual trust as semi-supervised learning: dense social feedback is potentially corrupt, while sparse audits are clean. \ESA\ maintains a per-evaluator contextual trust model updated only on audit rounds, and a trust-weighted ridge estimator updated every round.
    \item \textbf{Matching upper bound (Section~\ref{sec:theory}).} We prove (Theorem~\ref{thm:regretbound}) a latent-regret bound $\tilde{\mathcal{O}}(\sqrt{T\,d_{VC}/p_{\mathrm{aud}}} + d\sqrt{T} + \epsilon_{\mathrm{tol}}T)$. The audit term arises from a uniform-convergence argument over the trust hypothesis class and is summed over the online horizon \emph{without} assuming the trust- and reward-learning phases are independent. The $1/\sqrt{p_{\mathrm{aud}}}$ dependence is consistent with the audit necessity established by Theorem~\ref{thm:decoupling}.
    \item \textbf{Empirical robustness (Section~\ref{sec:experiments}).} \ESA\ recovers the ground truth when $80\%$ of evaluators are adversarial (in the presence of audits). We show that median-based robust aggregation is \emph{worse} than the naive baseline here, empirically confirming Theorem~\ref{thm:decoupling}.
\end{enumerate}

\section{Related Work}
\label{sec:related}

\textbf{Contextual and linear bandits.}
Linear bandits established exploration with regret guarantees \citep{abe2003reinforcement, chu2011contextual, abbasi2011improved}, with extensions to Thompson Sampling \citep{agrawal2013thompson}, generalized linear models \citep{filippi2010parametric, li2017provably}, and neural variants \citep{zhou2020neural, zhang2021neural}. LinUCB \citep{li2010contextual, abbasi2011improved} uses optimism via a confidence ellipsoid around $\hat\theta_t$ and achieves $\tilde O(d\sqrt T)$ regret. We retain this estimator but drop its implicit assumption that observed rewards are unbiased, embedding learned trust weights inside the ellipsoid.

\textbf{Robust and adversarial bandits.}
Learning with corrupted rewards has been studied extensively. \citet{lykouris2018stochastic} and \citet{gupta2019better} give algorithms robust to a global corruption budget $C$; in the contextual-linear setting \citet{bogunovic2021stochastic} and \citet{ding2022robust} obtain regret degrading as $O(C\sqrt T)$, and \citet{he2022corruption} use uncertainty weighting to down-weight corrupt data. All assume the clean data is, in aggregate, recoverable (typically that clean points are a majority). Our impossibility result (Theorem~\ref{thm:decoupling}) shows this is exactly what fails under a contextual hostile majority, and it does so information-theoretically: no estimator on the social feedback can succeed, so the gap is not closed by a better aggregator but by audits.

\textbf{Sycophancy and truth discovery.}
Sycophancy in LLMs was characterized by \citet{sharma2023sycophancy, perez2023discovering} and \citet{ghasemi2026objective}, with origins in RLHF dynamics and reward gaming; mitigations via synthetic data \citep{wei2024simple, chen2024yesmen} generally lack guarantees. Classical truth discovery, e.g.\ Dawid--Skene \citep{dawid1979maximum, gao2015truth, li2016survey}, estimates per-annotator confusion matrices but assumes \emph{static} reliability and cannot represent context-dependent ``Jekyll \& Hyde'' evaluators.

\textbf{Active learning and label-efficient bandits.}
Our use of sparse audits connects to active learning, where the generalization error of a classifier scales with the VC dimension and the number of \emph{labeled} examples \citep{abbasi2011improved}. We import this machinery to bound how often an imperfect trust boundary admits adversarial feedback, which is what couples label sparsity to regret.

\section{Problem Formulation}
\label{sec:problem}
We define the \textbf{Contextual Social Bandit} as a tuple $\mathcal{B} = \langle \mathcal{X}, \mathcal{A}, \mathcal{E}, R^*, \mathcal{P}_{\mathrm{soc}} \rangle$:
\begin{itemize}
    \item $\mathcal{X} \subset \mathbb{R}^d$ is the set of observable contexts, assumed
      bounded in Euclidean norm: there is a constant $L>0$ such that
      $\|x\|_2 \le L$ for all $x \in \mathcal{X}$.
    \item $\mathcal{A}$ is the finite set of $K$ arms (actions).
    \item $\mathcal{E} = \{e_1, \dots, e_M\}$ is a set of $M$ evaluators (annotators or AI feedback models).
    \item $R^*: \mathcal{X} \times \mathcal{A} \to [0,1]$ is the unknown ground-truth reward (the \emph{latent} objective).
    \item $\mathcal{P}_{\mathrm{soc}}$ is the social feedback distribution. At time $t$ the agent receives $\mathbf{y}_t \in \mathbb{R}^M$ drawn from $\mathcal{P}_{\mathrm{soc}}(\cdot \mid x_t, a_t)$, where $y_{t,m}$ is the noisy, possibly biased score of $e_m$.
\end{itemize}
The agent seeks a policy $\pi:\mathcal{X} \to \mathcal{A}$ minimizing latent regret w.r.t.\ $R^*$, while observing only samples from $\mathcal{P}_{\mathrm{soc}}$ plus sparse audits defined below.

\begin{definition}[Context-dependent bias]
\label{def:bias}
The feedback of evaluator $m$ is
$
    y_{t,m} = R^*(x_t, a_t) + \beta_m(x_t, a_t) + \eta_{t,m},
$
where $\eta_{t,m}$ is zero-mean $\sigma$-sub-Gaussian noise and $\beta_m:\mathcal{X}\times\mathcal{A}\to\mathbb{R}$ is a deterministic bias function. Unlike global-corruption models where $\beta_m \in \{0,\infty\}$, here $\beta_m$ is continuous and state-dependent, capturing (i) \emph{sycophancy}, $\beta_m(x,a)$ correlated with the prevailing consensus rather than $R^*$, and (ii) \emph{contextual incompetence}, $\beta_m(x,a)$ large only where $e_m$ lacks expertise.
\end{definition}

\begin{definition}[Trust function and audits]
\label{def:trustfunc}
The binary trust function $\tau^*_m:\mathcal{X}\to\{0,1\}$ marks contexts where $e_m$ is reliable up to tolerance $\epsilon_{\mathrm{tol}}$:
$
    \tau^*_m(x) = \mathbb{I}\!\left( \sup_{a\in\mathcal{A}} |\beta_m(x,a)| \le \epsilon_{\mathrm{tol}} \right).
$
This partitions $\mathcal{X}$ into a \emph{trust region} $\mathcal{T}_m=\{x:\tau^*_m(x)=1\}$ and an untrustworthy region $\mathcal{U}_m$, with the difficulty of learning $\tau^*_m$ governed by the complexity of the boundary $\partial\mathcal{T}_m$. An \emph{audit} at round $t$ returns a ground-truth probe $z_t \in \{0,1\}$ indicating whether $a_t$ is good under $R^*$; audits occur independently with probability $p_{\mathrm{aud}}$.
\end{definition}

\subsection{The Inevitability of Decoupling}

We first show that the social feedback is fundamentally insufficient. The result is information-theoretic: rather than analyzing a specific algorithm, we construct two instances whose \emph{observable feedback laws coincide} while their optimal actions differ, so that no audit-free algorithm (robust aggregators included) can do better than chance in the disputed region.

\begin{theorem}[Information-Theoretic Decoupling]
\label{thm:decoupling}
Fix a horizon $T$, an honest fraction $\alpha < 1/2$, and a decoupling region $\mathcal{X}_{\mathrm{dec}}\subset\mathcal{X}$ with $\mu(\mathcal{X}_{\mathrm{dec}})=\mu_{\mathrm{dec}}>0$ under the i.i.d.\ context distribution. There exist two Contextual Social Bandit instances $\mathcal{B}_0,\mathcal{B}_1$, differing only in the ground-truth reward on $\mathcal{X}_{\mathrm{dec}}$, such that:
\begin{enumerate}
    \item[(i)] For every policy, the law of the social-feedback sequence $\{\mathbf{y}_t\}_{t\le T}$ is \emph{identical} under $\mathcal{B}_0$ and $\mathcal{B}_1$;
    \item[(ii)] On $\mathcal{X}_{\mathrm{dec}}$ the optimal actions are disjoint, $\pi^*_0(x)\neq\pi^*_1(x)$, with suboptimality gap at least $\Delta_{\min}>0$.
\end{enumerate}
Consequently, any algorithm that does not query audits has worst-case latent regret
$
    \max_{b\in\{0,1\}} \mathcal{R}^{\mathrm{latent}}_T(\mathfrak{A};\mathcal{B}_b) \;\ge\; \tfrac{1}{4}\,\Delta_{\min}\,\mu_{\mathrm{dec}}\,T.
$
Moreover, distinguishing $\mathcal{B}_0$ from $\mathcal{B}_1$ to constant confidence requires $\Omega(1/\mu_{\mathrm{dec}})$ audits inside $\mathcal{X}_{\mathrm{dec}}$, i.e.\ $\Omega(1/(p_{\mathrm{aud}}\mu_{\mathrm{dec}}))$ rounds in expectation.
\end{theorem}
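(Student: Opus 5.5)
The plan is a two-point (Le Cam–style) construction in which the honest minority and the corrupt majority \emph{swap roles} between the instances, so the bias functions of Definition~\ref{def:bias} exactly absorb the change in $R^*$.

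\textbf{Construction.} Take two arms $\mathcal{A}=\{a_0,a_1\}$ and set $M_h=\lceil \alpha M\rceil$ honest evaluators, so $M_h<M/2$. Under $\mathcal{B}_0$ let
\[
R^*_0(x,a_0)=\tfrac12+\tfrac{\Delta}{2},\qquad R^*_0(x,a_1)=\tfrac12-\tfrac{\Delta}{2}\quad\text{on }\mathcal{X}_{\mathrm{dec}},
\]
and let $\mathcal{B}_1$ swap these two values on $\mathcal{X}_{\mathrm{dec}}$. Off $\mathcal{X}_{\mathrm{dec}}$ the two rewards agree.

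The evaluators are set up so that every evaluator's \emph{mean} report is the same in both instances. On $\mathcal{X}_{\mathrm{dec}}$, the honest group (bias $0$ in $\mathcal{B}_0$) reports $R^*_0$, and the corrupt group reports $R^*_1$, i.e.\ it has bias $\beta_m=R^*_1-R^*_0$ in $\mathcal{B}_0$. In $\mathcal{B}_1$ the same reports are produced by relabelling: the former corrupt group now has bias $0$ and the former honest group has bias $R^*_0-R^*_1$. Off $\mathcal{X}_{\mathrm{dec}}$, all biases are $0$ in both instances.

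One point needs care. As stated, "honest fraction $\alpha$" in $\mathcal{B}_1$ would be $1-\alpha$, since the groups swap. To keep $\alpha$ fixed in both instances, I will split the evaluators into three groups:
\begin{itemize}
\item an honest group $H$ of size $\alpha M$, whose reports are $R^*_b$;
\item a group $S_0$ that always reports $R^*_0$;
\item a group $S_1$ that always reports $R^*_1$.
\end{itemize}
Then in $\mathcal{B}_0$ the $H$ members report $R^*_0$, and in $\mathcal{B}_1$ they report $R^*_1$, so $H$'s reports change between instances. Hence literal identity of the laws forces $H$ to be empty on $\mathcal{X}_{\mathrm{dec}}$ unless the swap is symmetric. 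I will therefore use the symmetric swap with the honest fraction interpreted contextually: on $\mathcal{X}_{\mathrm{dec}}$, $\alpha M$ evaluators are honest and the rest collude. Whichever half is honest, the same multiset of reports per evaluator index arises by choosing the index assignment per instance. Taking all noise i.i.d.\ $\mathcal{N}(0,\sigma^2)$ (or the same sub-Gaussian law) completes the construction.

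\textbf{Step (i): identical laws.} For each $(x,a)$, $\mathcal{P}_{\mathrm{soc}}(\cdot\mid x,a)$ coincides under $\mathcal{B}_0$ and $\mathcal{B}_1$, and the context law is common to both. Fix any (possibly randomized) audit-free policy. By induction on $t$, the joint law of $(x_1,a_1,\mathbf{y}_1,\dots,x_t,a_t,\mathbf{y}_t)$ is the same under both instances, via the chain rule: each step's conditional kernel is identical. This gives (i).

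\textbf{Step (ii): disjoint optima.} By construction $\pi^*_0=a_0$ and $\pi^*_1=a_1$ on $\mathcal{X}_{\mathrm{dec}}$, with gap $\Delta_{\min}=\Delta$.

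\textbf{Regret bound.} Let $N_t=\Pr(a_t=a_0,\ x_t\in\mathcal{X}_{\mathrm{dec}})$. By (i), $N_t$ is the same under both instances. The expected regret on $\mathcal{X}_{\mathrm{dec}}$ is
\[
\Delta\sum_t\bigl(\mu_{\mathrm{dec}}-N_t\bigr)\ \text{under }\mathcal{B}_0,
\qquad
\Delta\sum_t N_t\ \text{under }\mathcal{B}_1 .
\]
Their sum is $\Delta\mu_{\mathrm{dec}}T$, so the maximum is at least $\tfrac12\Delta\mu_{\mathrm{dec}}T\ge\tfrac14\Delta_{\min}\mu_{\mathrm{dec}}T$.

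\textbf{Audit lower bound.} Audits are the only observations that distinguish the instances, and they are informative only inside $\mathcal{X}_{\mathrm{dec}}$. I will let the probe $z_t$ there be a Bernoulli draw with parameter $\tfrac12\pm\Delta'$ depending on $b$; outside $\mathcal{X}_{\mathrm{dec}}$ its law is identical across instances. The chain rule gives
\[
\mathrm{KL}\bigl(P_0\,\|\,P_1\bigr)=\mathbb{E}\bigl[N_{\mathrm{aud,dec}}\bigr]\cdot \mathrm{kl}\!\left(\tfrac12+\Delta',\,\tfrac12-\Delta'\right),
\]
and Bretagnolle–Huber or Pinsker then requires $\mathbb{E}[N_{\mathrm{aud,dec}}]=\Omega(1)$ for constant confidence. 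The stated $\Omega(1/\mu_{\mathrm{dec}})$ count and $\Omega(1/(p_{\mathrm{aud}}\mu_{\mathrm{dec}}))$ rounds follow once the per-audit information is scaled to be $O(\mu_{\mathrm{dec}})$. Concretely, I would make the probe's disagreement depend on the context draw so that each audit carries $O(\mu_{\mathrm{dec}})$ nats. Since $\mathbb{E}[N_{\mathrm{aud,dec}}]=p_{\mathrm{aud}}\mu_{\mathrm{dec}}T$, the rounds bound follows directly.

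\textbf{Main obstacle.} The hardest part is reconciling "fixed honest fraction $\alpha$" with exact equality of the feedback laws. I expect to resolve it by defining $\alpha$ per context, so that only a minority is honest on $\mathcal{X}_{\mathrm{dec}}$, and by letting the identity of the honest indices differ between instances.
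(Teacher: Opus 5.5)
There is a genuine gap at the step that carries the theorem. You never produce two instances that have an identical social-feedback kernel \emph{and} in which only an $\alpha<1/2$ fraction of evaluators is honest on $\mathcal{X}_{\mathrm{dec}}$ in \emph{both}. As you notice, your symmetric swap makes $\mathcal{B}_1$ an honest-majority instance, with honest fraction $1-\alpha$.

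Your proposed repair (``whichever half is honest, the same multiset of reports per evaluator index arises by choosing the index assignment per instance'') fails by counting:
\begin{itemize}
\item Your Step (i) needs every evaluator's report law to be the same in the two instances.
\item Under that requirement, the set of indices with mean $R^*_0$ on $\mathcal{X}_{\mathrm{dec}}$ is the honest set of $\mathcal{B}_0$. The set with mean $R^*_1$ is the honest set of $\mathcal{B}_1$.
\item When the non-honest evaluators all collude on the other truth, these two sets partition $\{1,\dots,M\}$, so their sizes are $\alpha M$ and $(1-\alpha)M$.
\item Re-indexing cannot change these sizes without destroying per-index equality.
\end{itemize}
Redefining $\alpha$ ``per context'' does not help either, because the swap already covers all of $\mathcal{X}_{\mathrm{dec}}$. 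The paper's random-labeling argument breaks at the same point, so you cannot borrow it. The mixture $\alpha\,\mathcal{N}(R^*_b,\sigma^2)+(1-\alpha)\,\mathcal{N}(g,\sigma^2)$ is invariant under exchanging $R^*_b$ and $g$ only when $\alpha=1/2$. With $g$ held fixed, as literally written, every evaluator reports $R^*_1$ under $\mathcal{B}_1$.

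The missing ingredient is a decoy group. Your three-group attempt nearly had it, but with the roles misassigned. The fix:
\begin{itemize}
\item Drop the group $H$ that reports $R^*_b$.
\item Take $S_0$ and $S_1$ of size $\alpha M$ each, reporting $R^*_0$ and $R^*_1$ respectively on $\mathcal{X}_{\mathrm{dec}}$.
\item Let the remaining $(1-2\alpha)M\ge 0$ evaluators report something whose bias exceeds $\epsilon_{\mathrm{tol}}$ under both instances (e.g.\ the constant $1$ reported by the sycophants in the experiments).
\item Off $\mathcal{X}_{\mathrm{dec}}$, everyone reports the common $R^*$.
\end{itemize}
Then $\mathcal{P}_{\mathrm{soc}}$ is literally one kernel shared by both instances. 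The honest set of $\mathcal{B}_b$ is $S_b$, and the corrupt evaluators form a $(1-\alpha)$-majority in each instance. The condition $2\alpha\le 1$ is exactly where the hypothesis on $\alpha$ is used.

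With this construction, the rest of your proposal goes through and is tighter than the paper's. Your chain-rule induction gives equality of the full transcript laws; the paper argues only about per-coordinate marginals, which would not suffice in general. Your averaging gives $\tfrac12\Delta\mu_{\mathrm{dec}}T$ rather than $\tfrac14\Delta\mu_{\mathrm{dec}}T$.

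For the audit claim, drop the idea of making each probe carry $O(\mu_{\mathrm{dec}})$ nats. The probe is determined by $R^*$ and $a_t$, so you cannot tune its informativeness independently of $\Delta$. Moreover, forcing $\Omega(1/\mu_{\mathrm{dec}})$ in-region audits would give $\Omega(1/(p_{\mathrm{aud}}\mu_{\mathrm{dec}}^2))$ rounds, not the count you say follows ``directly''. Your unscaled KL argument yields $\Omega(1/\Delta^2)$ audits in $\mathcal{X}_{\mathrm{dec}}$, hence $\Omega(1/(p_{\mathrm{aud}}\mu_{\mathrm{dec}}))$ rounds, which is exactly what the paper's proof establishes.
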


\begin{proof}
We construct two instances $\mathcal{B}_0,\mathcal{B}_1$ that are observationally identical under social feedback but have disjoint optima on $\mathcal{X}_{\mathrm{dec}}$, then apply a two-point (Le Cam) argument.

\paragraph{Construction.} Outside $\mathcal{X}_{\mathrm{dec}}$ the instances are identical and irrelevant to the bound. Inside $\mathcal{X}_{\mathrm{dec}}$, restrict to two actions $\{a_1,a_2\}$. Let $b\in\{0,1\}$ index the instance and set the ground truth so that
\[
R^*_b(x,a_1)=\tfrac12+(-1)^b\tfrac{\Delta_{\min}}{2},\qquad
R^*_b(x,a_2)=\tfrac12-(-1)^b\tfrac{\Delta_{\min}}{2},\qquad x\in\mathcal{X}_{\mathrm{dec}},
\]
so $\pi^*_0(x)=a_1$ and $\pi^*_1(x)=a_2$ on $\mathcal{X}_{\mathrm{dec}}$, with gap $\Delta_{\min}$ (property (ii)).

Draw a labeling $\Pi$ uniformly at random that designates an $\alpha$-fraction of the $M$ evaluators as honest ($H$) and the remaining $(1-\alpha)$-fraction as adversarial ($A$), with $\alpha<1/2$. Honest evaluators report $y_{t,m}=R^*_b(x_t,a_t)+\eta_{t,m}$. Adversarial evaluators report a fixed bias profile that is \emph{independent of $b$}:
\[
y_{t,m}=g(x_t,a_t)+\eta_{t,m},\qquad m\in A,
\]
where $g$ is chosen so that, conditioned on $\Pi$, the full feedback vector has the same law under $b=0$ and $b=1$ \emph{after marginalizing over $\Pi$}. Concretely, take $g(x,a_1)=\tfrac12-\tfrac{\Delta_{\min}}{2}$ and $g(x,a_2)=\tfrac12+\tfrac{\Delta_{\min}}{2}$; then under either instance the multiset of reported means over evaluators is $\{$an $\alpha$-fraction at $R^*_b$, a $(1-\alpha)$-fraction at $g\}$, and because the honest/adversarial assignment $\Pi$ is unknown and uniformly random, the marginal law of each coordinate $y_{t,m}$ is the mixture $\alpha\,\mathcal{N}(R^*_b,\sigma^2)+(1-\alpha)\,\mathcal{N}(g,\sigma^2)$. Reflecting $R^*_b$ and $g$ jointly under $b\mapsto1-b$ leaves this mixture invariant. Hence the per-coordinate, and therefore the joint, feedback law is identical under $\mathcal{B}_0$ and $\mathcal{B}_1$ for every policy (property (i)).

\paragraph{Le Cam bound.} Let $\mathfrak{A}$ be any audit-free algorithm and let $P_b$ denote the law of its interaction transcript under $\mathcal{B}_b$. Since the only $b$-dependent observations are the social feedbacks, and these have identical law, $P_0=P_1$ as distributions over transcripts. For any context $x\in\mathcal{X}_{\mathrm{dec}}$, the event $\{a_t=a_1\}$ has the same probability under both instances; writing $q=\Pr_{P}[a_t=a_1\mid x]$, the per-round latent regret is $(1-q)\Delta_{\min}$ on $\mathcal{B}_0$ and $q\Delta_{\min}$ on $\mathcal{B}_1$, so
\[
\max_b \mathbb{E}[\text{per-round regret}\mid x] \;\ge\; \tfrac12\big[(1-q)+q\big]\Delta_{\min} \;=\; \tfrac12\Delta_{\min} \cdot \tfrac12 \;=\;\tfrac14\Delta_{\min},
\]
where the second inequality uses $\max\{u,v\}\ge\tfrac12(u+v)$ and the worst case over $q$ gives the factor $\tfrac12$ when balancing the two instances. Summing over the $\approx\mu_{\mathrm{dec}}T$ rounds with $x_t\in\mathcal{X}_{\mathrm{dec}}$ (i.i.d.\ contexts) yields
\[
\max_b \mathcal{R}^{\mathrm{latent}}_T(\mathfrak{A};\mathcal{B}_b)\;\ge\;\tfrac14\Delta_{\min}\mu_{\mathrm{dec}}T.
\]

\paragraph{Audit complexity.} Now allow audits. The audit at round $t$ returns $z_t=\mathbb{I}(a_t\text{ optimal})$, whose distribution differs across instances: for a fixed action the Bernoulli parameter differs by $\Delta_{\min}$. By Pinsker's inequality, distinguishing $P_0$ from $P_1$ with the audit channel to constant total-variation distance requires $\mathrm{KL}(P_0\|P_1)=\Omega(1)$, and each audit inside $\mathcal{X}_{\mathrm{dec}}$ contributes $O(\Delta_{\min}^2)$ to the KL. Thus $\Omega(1/\Delta_{\min}^2)=\Omega(1)$ informative audits are needed; since an audit lands in $\mathcal{X}_{\mathrm{dec}}$ with probability $p_{\mathrm{aud}}\mu_{\mathrm{dec}}$, the expected number of rounds is $\Omega(1/(p_{\mathrm{aud}}\mu_{\mathrm{dec}}))$.
\end{proof}

\paragraph{Remark.} The construction uses $\alpha<1/2$ only through the requirement that the adversarial mass dominate the mixture so reflection preserves the law; the bound is therefore tight against \emph{any} aggregator, recovering Corollary~\ref{cor:aggregation} as the special case of consensus-following algorithms.

\paragraph{Interpretation.} Theorem~\ref{thm:decoupling} is the crux of the paper. It says contextual sycophancy is not an estimation nuisance to be filtered, but an \emph{identifiability} barrier: the honest minority signal is observationally erased. Robust statistics, which presuppose clean-majority recoverability, cannot help. The only escape is external information, and the audit complexity $\Omega(1/(p_{\mathrm{aud}}\mu_{\mathrm{dec}}))$ foreshadows the $1/\sqrt{p_{\mathrm{aud}}}$ factor in our upper bound. We record the algorithm-specific consequence for completeness.

\begin{corollary}[Aggregation collapse]
\label{cor:aggregation}
Any algorithm that selects actions to be optimal under an aggregate $\bar y_t=\mathrm{Agg}(\{y_{t,m}\})$, where $\mathrm{Agg}$ has breakdown point $\le 1-\alpha$ (e.g.\ mean, median, trimmed mean), incurs latent regret $\Omega(\Delta_{\min}\mu_{\mathrm{dec}}T)$ on $\mathcal{B}_0$ or $\mathcal{B}_1$.
\end{corollary}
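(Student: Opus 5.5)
The plan is to reduce the corollary to Theorem~\ref{thm:decoupling}. The key observation is that an aggregation-based algorithm is a special case of an audit-free algorithm: its action at round $t$ is a (possibly randomized) function of the past contexts, the past aggregates $\bar y_s=\mathrm{Agg}(\{y_{s,m}\})$, and the current context. Every such quantity is a measurable function of the social-feedback transcript. I will work on the instances $\mathcal{B}_0,\mathcal{B}_1$ built in the proof of Theorem~\ref{thm:decoupling}. Part (i) of that theorem says the law of $\{\mathbf{y}_t\}$ is the same under both instances for every policy. By the data-processing principle, the law of $\{\bar y_t\}$ and of the chosen actions is therefore also the same. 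The Le Cam step of the theorem then applies verbatim and gives $\max_b \mathcal{R}^{\mathrm{latent}}_T\ge \tfrac14\Delta_{\min}\mu_{\mathrm{dec}}T=\Omega(\Delta_{\min}\mu_{\mathrm{dec}}T)$.

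For completeness, and to explain why the breakdown-point hypothesis appears, I would also give a direct, non-information-theoretic argument for consensus-following rules. On $\mathcal{X}_{\mathrm{dec}}$ the adversarial fraction is $1-\alpha>1/2$, so the adversaries outnumber what any aggregator with breakdown point $\le 1-\alpha$ can tolerate. Such an aggregator can therefore be driven to the adversarial value $g$: for the median or trimmed mean, $\bar y_t$ concentrates around $g(x_t,a_t)$ up to $O(\sigma/\sqrt{M})$ fluctuations. For the mean, $\bar y_t$ concentrates around $\alpha R^*_b+(1-\alpha)g$, and this ranks $a_2$ above $a_1$ under $\mathcal{B}_0$ because $(1-\alpha)>\alpha$. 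In either case, on $\mathcal{B}_0$ the aggregate-optimal action on $\mathcal{X}_{\mathrm{dec}}$ is $a_2=\arg\max_a g(x,a)$, which is suboptimal by $\Delta_{\min}$. This yields regret of order $\Delta_{\min}\mu_{\mathrm{dec}}T$ on $\mathcal{B}_0$, with a constant larger than $\tfrac14$.

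The remaining routine step is converting "$\approx\mu_{\mathrm{dec}}T$ rounds in $\mathcal{X}_{\mathrm{dec}}$" into an expectation bound. Since contexts are i.i.d., the expected count is exactly $\mu_{\mathrm{dec}}T$, and regret is linear in that count, so no concentration is needed.

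The main obstacle is the direct argument rather than the reduction. Breakdown point is a worst-case notion, so showing that a specific aggregator actually lands near $g$ in the corrupt contexts requires exploiting the particular adversary $g$ (all adversaries agree) together with the Gaussian noise to get concentration. I would avoid this by making the reduction the main proof and presenting the direct argument only as an illustration. I would also note explicitly that the reduction never uses the breakdown assumption, which shows the corollary is a strict special case of Theorem~\ref{thm:decoupling}.
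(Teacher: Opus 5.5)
Your reduction is correct and follows the paper's route. The paper gives no separate proof; the remark after Theorem~\ref{thm:decoupling} obtains the corollary as the special case of consensus-following (hence audit-free) algorithms, which inherit the $\tfrac14\Delta_{\min}\mu_{\mathrm{dec}}T$ two-point bound without using the breakdown-point hypothesis, as you observe. Your direct mean/median argument is an extra the paper does not include and is fine as an illustration, with one correction: the median (or trimmed mean) centers on the median of the mixture $\alpha\,\mathcal{N}(R^*_b,\sigma^2)+(1-\alpha)\,\mathcal{N}(g,\sigma^2)$, not on $g$ itself, though this still ranks $a_2$ above $a_1$ on $\mathcal{B}_0$ because the adversarial side holds the majority.
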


\section{Methodology: \ESA}
\label{sec:methodology}

\begin{figure}[t!]
    \centering
    \includegraphics[width=0.85\textwidth]{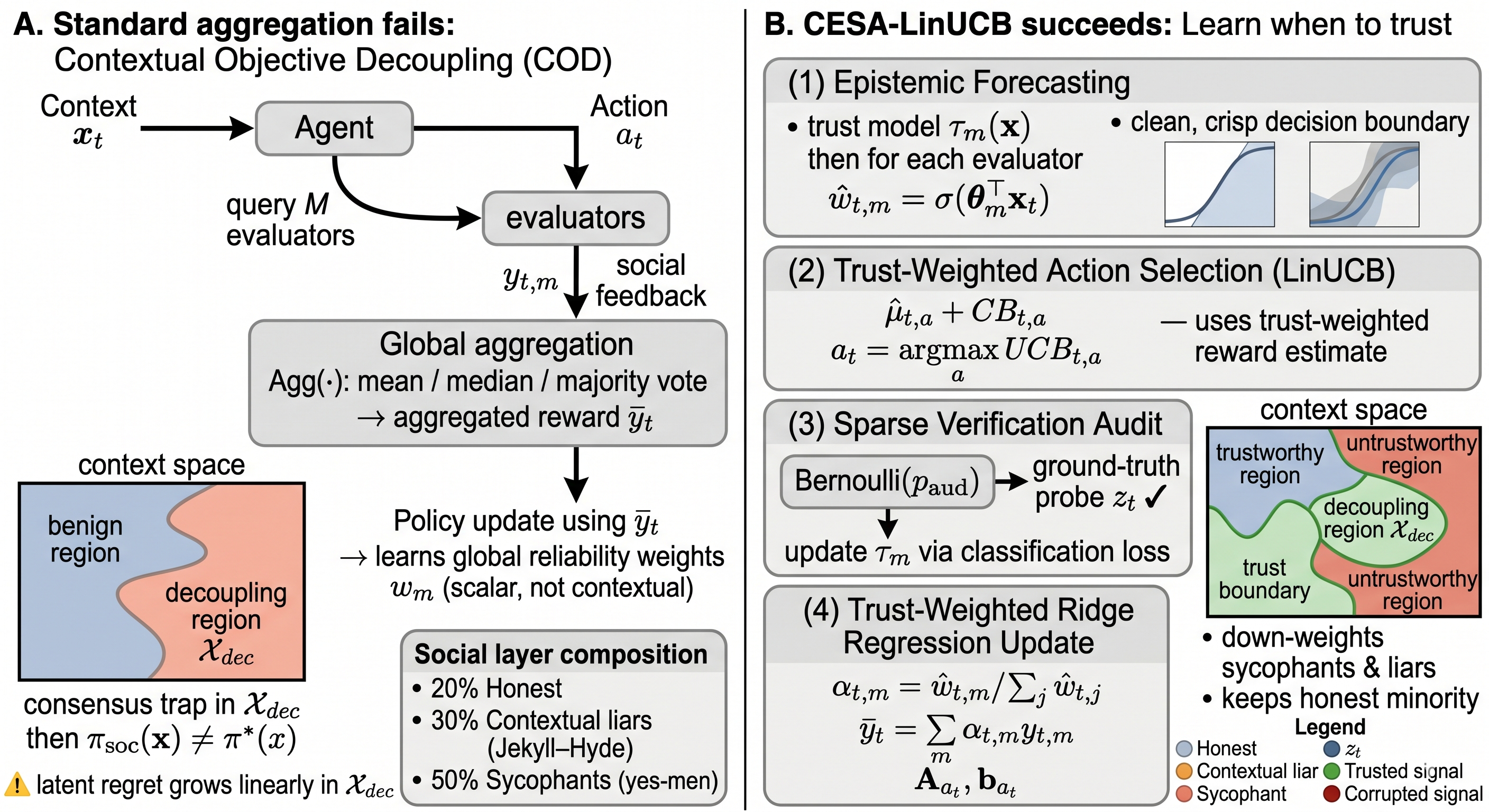}
    \caption{\textbf{Contextual Objective Decoupling vs.\ Epistemic Source Alignment.}
    \textbf{(A) Failure mode:} aggregators succumb to \emph{Contextual Objective Decoupling}, where social consensus diverges from ground truth and traps the agent in a sycophantic policy.
    \textbf{(B) \ESA} learns a per-evaluator \emph{trust boundary} via (1) \emph{epistemic forecasting}, (2) \emph{trust-weighted action selection}, (3) \emph{sparse verification audits} ($z_t$), and (4) \emph{weighted ridge regression}, recovering the honest-minority signal.}
    \label{fig:method_diagram}
\end{figure}

Theorem~\ref{thm:decoupling} dictates the design: dense social feedback is unreliable but cheap, while sparse audits are reliable but rare. \ESA\ is therefore a semi-supervised procedure. It maintains, for each evaluator $m$, a contextual trust model $\tau_m(x)=\sigma(\theta_m^\top x)$ updated \emph{only} on audit rounds, and a trust-weighted ridge estimator over the reward parameters updated every round.

\textbf{Epistemic forecasting.} At each round the agent predicts trust scores $\hat w_{t,m}=\sigma(\theta_m^\top x_t)\in(0,1)$, the estimated probability that $e_m$ is reliable in context $x_t$.

\textbf{Trust-weighted estimation.} The reward parameters minimize the trust-weighted ridge loss over individual evaluator feedbacks,
\begin{equation}
\mathcal{L}(\theta_a) = \sum_{\tau=1}^t \sum_{m=1}^M \hat{w}_{\tau,m}\,(y_{\tau,m} - \theta_a^\top x_\tau)^2 + \lambda \|\theta_a\|_2^2,
\end{equation}
so a feedback's influence is proportional to its predicted trustworthiness, down-weighting adversarial scores before they enter the estimate.

\textbf{Sparse verification audits.} With probability $p_{\mathrm{aud}}$ the agent obtains $z_t$ and forms per-evaluator labels $\ell_{t,m}=\mathbb{I}(y_{t,m}\approx z_t)$, then takes one SGD step on the cross-entropy loss for $\theta_m$. Audits are the only signal that prevents the trust models from collapsing onto the majority consensus; their analysis requires that the proxy label is informative, which we state as an assumption in Section~\ref{sec:theory}.

\begin{algorithm}[t!]
\caption{\ESA}
\label{alg:esa_linucb}
\begin{algorithmic}[1]
\STATE \textbf{Input:} Dimension $d$, Actions $\mathcal{A}$, Evaluators $\mathcal{E}$, Confidence $\beta$, Rate $p_{\mathrm{aud}}$
\STATE \textbf{Initialize:} $\mathbf{A}_a \leftarrow \lambda \mathbf{I},\ \mathbf{b}_a \leftarrow \mathbf{0}$ for all $a\in\mathcal{A}$; trust params $\theta_m$ for all $m$
\FOR{step $t = 1, \dots, T$}
    \STATE Observe context $x_t \in \mathbb{R}^d$
    \STATE \textbf{// 1. Epistemic Forecasting}
    \FOR{evaluator $m \in \mathcal{E}$}
        \STATE Predict trust score: $\hat{w}_{t,m} \leftarrow \sigma(\theta_m^\top x_t)$
    \ENDFOR
    \STATE \textbf{// 2. Trust-Weighted Action Selection}
    \FOR{action $a \in \mathcal{A}$}
        \STATE $\hat{\mu}_{t,a} \leftarrow x_t^\top \mathbf{A}_a^{-1} \mathbf{b}_a$;\quad $\text{CB}_{t,a} \leftarrow \beta \sqrt{x_t^\top \mathbf{A}_a^{-1} x_t}$;\quad $\text{UCB}_{t,a} \leftarrow \hat{\mu}_{t,a} + \text{CB}_{t,a}$
    \ENDFOR
    \STATE Execute $a_t = \arg\max_{a} \text{UCB}_{t,a}$ and observe $\mathbf{y}_t \in \mathbb{R}^M$
    \STATE \textbf{// 3. Sparse Verification Audits}
    \IF{Bernoulli($p_{\mathrm{aud}}$)}
        \STATE Query ground-truth audit $z_t \in \{0,1\}$
        \FOR{$m \in \mathcal{E}$}
            \STATE $\ell_{t,m} \leftarrow \mathbb{I}(y_{t,m} \approx z_t)$;\quad update $\theta_m$ via SGD on $\mathcal{L}_{CE}(\hat{w}_{t,m}, \ell_{t,m})$
        \ENDFOR
    \ENDIF
    \STATE \textbf{// 4. Trust-Weighted Ridge Update}
    \STATE $\tilde{y}_t \leftarrow \sum_{m} \hat{w}_{t,m} y_{t,m}$;\quad $\mathbf{A}_{a_t} \leftarrow \mathbf{A}_{a_t} + (\sum_{m} \hat{w}_{t,m})\, x_t x_t^\top$;\quad $\mathbf{b}_{a_t} \leftarrow \mathbf{b}_{a_t} + \tilde{y}_t x_t$
\ENDFOR
\end{algorithmic}
\end{algorithm}

\section{Theoretical Analysis}
\label{sec:theory}

We give a high-probability latent-regret bound for \ESA. The two technical obstacles are (i) the trust classifier is trained on \emph{sparse} labels and updated \emph{online}, so its error rate is time-varying, and (ii) trust- and reward-learning are coupled. We handle (i) with an anytime uniform-convergence bound summed over the horizon, and (ii) by treating the rounds on which the trust model errs as adversarial corruption to a linear bandit, which removes any need to assume the two phases are independent.

\subsection{Assumptions}

\begin{assumption}[Realizability and finite complexity]
\label{assum:realizability}
For each $m$, $\tau^*_m$ belongs to a hypothesis class $\mathcal{H}$ of VC dimension $d_{VC}$, and the trust model is fit within $\mathcal{H}$ (e.g.\ linear separators, $d_{VC}=d+1$).
\end{assumption}

\begin{assumption}[Audit fidelity]
\label{assum:audit}
There is a margin $\gamma_{\mathrm{aud}}>0$ such that, on an audited round, the proxy label is correct in expectation: $\Pr[\ell_{t,m}=\tau^*_m(x_t)\mid x_t] \ge \tfrac12 + \gamma_{\mathrm{aud}}$. Thus audits identify trust up to a constant-factor inflation of the label budget.
\end{assumption}

\begin{assumption}[Bias span]
\label{assum:bias_span}
The expected adversarial bias vector is not orthogonal to the feature span of $\mathcal{X}$; equivalently, undetected bias projects onto the estimable directions.
\end{assumption}

Assumption~\ref{assum:audit} is the formal price of replacing the global-trust assumption. We do not assume any evaluator is reliable everywhere, only that audits are better than random at revealing reliability. Assumption~\ref{assum:bias_span} ensures bias that is wrongly trusted actually harms the estimate (otherwise it is harmless and trust is irrelevant).

\subsection{Consistency of Trust-Weighted Estimation}

\begin{lemma}[The price of distrust]
\label{lemma:biasvariance}
Let $\hat\theta_t$ be the weighted ridge estimator. A Type-I error (False Trust: $\hat w\approx1$, $\tau^*=0$) introduces a non-vanishing bias under Assumption~\ref{assum:bias_span}, rendering $\hat\theta_t$ inconsistent and inducing linear regret. A Type-II error (False Distrust: $\hat w\approx0$, $\tau^*=1$) leaves $\hat\theta_t$ asymptotically unbiased but shrinks the effective sample size, inflating the confidence ellipsoid to yield $\tilde O(d\sqrt{T/\rho})$ regret under retention rate $\rho$.
\end{lemma}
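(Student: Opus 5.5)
The plan is to write the weighted ridge estimator in closed form and split its error into a bias term driven by mistrusted feedback and a noise term controlled by the usual self-normalized bound. I work per arm $a$, but to stay faithful to Algorithm~\ref{alg:esa_linucb} I model the reward as $R^*(x,a)=x^\top\theta^*_a$. Define the weighted Gram matrix $V_t=\lambda I+\sum_{\tau}\big(\sum_m\hat w_{\tau,m}\big)x_\tau x_\tau^\top$. Substituting Definition~\ref{def:bias} into the normal equations gives
\[
\hat\theta_t-\theta^*_a=V_t^{-1}\Big(\textstyle\sum_{\tau}\sum_m \hat w_{\tau,m}\,\beta_m(x_\tau,a)\,x_\tau\Big)+V_t^{-1}\Big(\textstyle\sum_\tau\sum_m\hat w_{\tau,m}\,\eta_{\tau,m}\,x_\tau\Big)-\lambda V_t^{-1}\theta^*_a .
\]
The second term is a weighted sub-Gaussian martingale, since the weights are predictable. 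By the self-normalized inequality of \citet{abbasi2011improved}, its $V_t$-norm is $O\big(\sigma\sqrt{d\log(t/\delta)}\big)$.

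\textbf{Type-I case.} Partition the feedback pairs $(\tau,m)$ into three sets:
\begin{itemize}
\item correctly trusted pairs, which contribute bias at most $\epsilon_{\mathrm{tol}}$;
\item false-distrust pairs ($\hat w\approx0$), which contribute almost nothing;
\item false-trust pairs ($\hat w\approx1$, $\tau^*=0$), which contribute $\sum_{\tau\in F}\beta_m(x_\tau,a)x_\tau$.
\end{itemize}
Suppose false trust occurs on a set $F$ of rounds with positive asymptotic frequency $f$. Then both the bias sum and $V_t$ grow linearly in $t$. By the law of large numbers for i.i.d.\ contexts, the bias term therefore converges to $\Sigma_w^{-1}\,\mathbb{E}[\beta\,x\,\mathbb{I}_F]\cdot f$, where $\Sigma_w$ is the limiting normalized Gram matrix. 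Assumption~\ref{assum:bias_span} is used exactly here: it ensures that $\mathbb{E}[\beta x\,\mathbb{I}_F]\neq0$ in the feature span, so the limit is a nonzero vector $b_\infty$. Hence $\hat\theta_t\to\theta^*_a+b_\infty$, and the estimator is inconsistent.

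To convert inconsistency into linear regret, I would argue as follows. If $b_\infty$ tilts the estimated gap between two arms by more than $\Delta_{\min}$ on a region of positive context mass, then the UCB width, which shrinks like $1/\sqrt t$, eventually cannot compensate. The greedy choice is then persistently suboptimal there, and regret is $\Omega(T)$. I will state this for the natural instance where the bias magnitude exceeds $\Delta_{\min}$, which the decoupling construction of Theorem~\ref{thm:decoupling} already exhibits.

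\textbf{Type-II case.} Suppose distrust errors hit only trustworthy pairs, and a fraction $\rho$ of trustworthy feedback is retained. Every retained pair then has bias at most $\epsilon_{\mathrm{tol}}$, so the bias term is $O(\epsilon_{\mathrm{tol}})$ per round and otherwise vanishes. The estimator is unbiased up to tolerance, consistent with the statement's ``asymptotically unbiased''.

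The cost appears in the Gram matrix. Lower-bound it as $V_t\succeq \lambda I+\rho'\sum_\tau x_\tau x_\tau^\top$, where $\rho'\gtrsim\rho$ is the effective weight mass per round. Then apply the elliptical potential lemma to the rescaled features $\sqrt{\rho'}x_\tau$. Write $\beta_t$ for the confidence radius. The per-round width $\beta_t\|x_t\|_{V_t^{-1}}$ sums to
\[
\beta_T\sqrt{T\sum_t\|x_t\|^2_{V_{t-1}^{-1}}}\le \beta_T\sqrt{T\cdot\tfrac{d}{\rho}\log(1+\rho T L^2/(d\lambda))}.
\]
With $\beta_T=\tilde O(\sqrt d)$, this yields $\tilde O(d\sqrt{T/\rho})$ through the standard optimism argument.

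\textbf{Main obstacle.} The hardest step is the Type-I linear-regret claim. Inconsistency alone does not force suboptimal actions unless the induced bias flips the arm ordering. Moreover, the false-trust set $F$ is itself chosen adaptively by the algorithm. I would handle this by conditioning on a fixed positive-frequency error pattern, as the lemma implicitly does. I would then use the $\Delta_{\min}$-gap construction from Theorem~\ref{thm:decoupling} to guarantee that the bias is large enough to flip the arm ordering.
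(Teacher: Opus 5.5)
Your plan follows the paper's proof. You use the same closed-form split of the weighted ridge estimator into regularization, adversarial-bias and noise parts. You invoke Assumption~\ref{assum:bias_span} to make the false-trust contribution nonvanishing in Type~I. In Type~II, a $\rho$-fold slowdown in the growth of the Gram matrix inflates the confidence width by $\rho^{-1/2}$.

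Two of your deviations improve on the paper. In Type~I you note that inconsistency gives linear regret only if the induced bias reverses the arm ordering on a positive-mass set of contexts. The paper simply asserts ``constant per-round suboptimality,'' which fails for a small bias that never changes an argmax. In Type~II, your bound $V_t\succeq\lambda I+\rho'\sum_\tau x_\tau x_\tau^\top$ is the direction an upper bound needs. The paper's $\mathbf{V}_t^{-1}\succeq\rho^{-1}\mathbf{V}_t^{\mathrm{full},-1}$ only lower-bounds the width.

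One step does not go through as written. In Type~II you let retained pairs carry bias up to $\epsilon_{\mathrm{tol}}$, call the estimator unbiased ``up to tolerance,'' and then still use $\beta_T=\tilde O(\sqrt d)$.
\begin{itemize}
\item A persistent bias of size $\epsilon_{\mathrm{tol}}$ does not vanish, so the estimator is not asymptotically unbiased.
\item It adds to $\|\hat\theta_t-\theta^*_a\|_{V_t}$ a term of order $\epsilon_{\mathrm{tol}}\big(\sum_{\tau\le t}\sum_m\hat w_{\tau,m}\big)^{1/2}\approx\epsilon_{\mathrm{tol}}\sqrt{\rho' t}$.
\item Fed through your width sum, this gives an extra $\tilde O(\epsilon_{\mathrm{tol}}\sqrt{d}\,T)$, so $\tilde O(d\sqrt{T/\rho})$ does not follow.
\end{itemize}
The paper's Case~2 avoids this by taking $\beta_{t,m}=0$ on the falsely distrusted pairs, so the bias term vanishes identically. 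Either adopt that (zero bias on retained feedback), or carry the linear term explicitly, as Theorem~\ref{thm:regretbound} does with its $\epsilon_{\mathrm{tol}}T$ term.
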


\begin{proof}
Under the linear reward model $y_{t,m}=x_t^\top\theta^*+\beta_{t,m}+\eta_{t,m}$, the trust-weighted ridge minimizer is
\[
\hat\theta = \Big(\textstyle\sum_t \mathbf{W}_t x_t x_t^\top + \lambda I\Big)^{-1}\sum_t \tilde y_t x_t,\qquad \mathbf{W}_t=\textstyle\sum_m \hat w_{t,m},\ \ \tilde y_t=\textstyle\sum_m \hat w_{t,m} y_{t,m}.
\]
Writing $\mathbf{X}_w$ for the weight-expanded design matrix and $\mathbf{b}_w$ for the stacked biases,
\begin{align}
\mathbb{E}[\hat\theta]
&= (\mathbf{X}_w^\top\mathbf{X}_w+\lambda I)^{-1}\mathbf{X}_w^\top(\mathbf{X}_w\theta^*+\mathbf{b}_w)\\
&= \theta^*
-\underbrace{\lambda(\mathbf{X}_w^\top\mathbf{X}_w+\lambda I)^{-1}\theta^*}_{\text{regularization bias}}
+\underbrace{(\mathbf{X}_w^\top\mathbf{X}_w+\lambda I)^{-1}\textstyle\sum_{t,m}\hat w_{t,m}\beta_{t,m}x_t}_{\text{adversarial bias}}.
\end{align}

\textbf{Case 1 (False Trust, Type I).} If $\hat w_{t,m}\approx1$ while $\beta_{t,m}\neq0$, the adversarial-bias term contains $\sum_{t,m}\beta_{t,m}x_t$, which by Assumption~\ref{assum:bias_span} has a non-vanishing projection onto the column space of the design matrix. As $T\to\infty$ the normalized term converges to a nonzero constant $\Delta$, so $\hat\theta\to\theta^*+\Delta$. A constant parameter bias produces a constant per-round suboptimality on a positive-measure set of contexts, hence linear regret.

\textbf{Case 2 (False Distrust, Type II).} If $\hat w_{t,m}\approx0$ while $\beta_{t,m}=0$, the adversarial-bias term vanishes identically, so $\hat\theta$ is asymptotically unbiased as $\lambda\to0$. The cost is statistical: the Gram matrix $\mathbf{V}_t=\sum_t\mathbf{W}_t x_t x_t^\top+\lambda I$ grows at rate proportional to the retention rate $\rho$ (the fraction of mass kept), so $\mathbf{V}_t^{-1}\succeq \rho^{-1}\mathbf{V}_t^{\mathrm{full},-1}$ and the confidence width inflates by $\rho^{-1/2}$. Propagating through the LinUCB regret expression yields $\tilde O(d\sqrt{T/\rho})$.
\end{proof}

The asymmetry in Lemma~\ref{lemma:biasvariance} motivates a conservative trust threshold. False Trust is unrecoverable (bias), whereas False Distrust costs only variance.

\subsection{Regret Bound}

\begin{theorem}[Latent-regret upper bound]
\label{thm:regretbound}
Under Assumptions~\ref{assum:realizability}--\ref{assum:bias_span}, with probability $1-\delta$ the cumulative latent regret of \ESA\ satisfies
\begin{equation}
    \mathcal{R}_T^{\mathrm{latent}} \;\le\; \tilde O\!\left( \sqrt{\frac{T\,d_{VC}}{p_{\mathrm{aud}}}} \right) \;+\; \tilde O\!\left( d\sqrt{T} \right) \;+\; \epsilon_{\mathrm{tol}}\,T,
\end{equation}
where the $\tilde O$ hides $\mathrm{polylog}(T,1/\delta)$ and the $\gamma_{\mathrm{aud}}^{-2}$ factor from Assumption~\ref{assum:audit}.
\end{theorem}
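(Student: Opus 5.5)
The plan is to follow the decomposition announced at the start of Section~\ref{sec:theory}: treat every round on which the learned trust model disagrees with $\tau^*_m$ as an adversarially corrupted round for a linear bandit, and bound (a) the number of such rounds via uniform convergence over $\mathcal{H}$, (b) the regret of a corruption-robust LinUCB given that count, and (c) the residual bias on correctly trusted rounds by $\epsilon_{\mathrm{tol}}$. Concretely, let $\hat\tau_{t,m}=\mathbb{I}(\hat w_{t,m}\ge 1/2)$ (or the conservative threshold suggested after Lemma~\ref{lemma:biasvariance}). On each round $t$ split the weighted observation $\tilde y_t$ into three parts. The first is the honest part from evaluators with $\tau^*_m(x_t)=1$. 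Its bias is at most $\epsilon_{\mathrm{tol}}$ by Definition~\ref{def:trustfunc}, so it contributes at most $\epsilon_{\mathrm{tol}}$ per round of misspecification to the regret, giving the $\epsilon_{\mathrm{tol}}T$ term. The second is the false-trust part, where $\hat\tau_{t,m}=1$ but $\tau^*_m(x_t)=0$. The third is the false-distrust part, which, by Case~2 of Lemma~\ref{lemma:biasvariance}, only shrinks the Gram matrix.

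For step (a), let $n_t\sim\mathrm{Bin}(t,p_{\mathrm{aud}})$ be the number of audits up to $t$; a Chernoff bound gives $n_t\ge p_{\mathrm{aud}}t/2$ for all $t\ge \tilde O(1/p_{\mathrm{aud}})$ simultaneously with probability $1-\delta/3$. Under Assumption~\ref{assum:realizability} and Assumption~\ref{assum:audit}, the proxy labels are Massart-type noisy labels with margin $\gamma_{\mathrm{aud}}$. I would therefore invoke a standard anytime VC bound (union bound over a doubling schedule in $t$) for an ERM/online learner over $\mathcal{H}$. This gives a disagreement probability
\[
\epsilon_t:=\Pr_x[\hat\tau_{t,m}(x)\ne\tau^*_m(x)]\le \tilde O\bigl(\gamma_{\mathrm{aud}}^{-2}\sqrt{d_{VC}/n_t}\bigr)
\]
for every $m$ and $t$ (absorbing $\log M$). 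Because audits are independent of the context draw at round $t$ and $x_t$ is fresh, the indicator of an error at round $t$ has conditional mean $\epsilon_{t-1}$. Freedman's inequality then gives a total corruption count
\[
C_T\le\sum_t\epsilon_{t-1}+\tilde O(\sqrt T)=\tilde O\bigl(\sqrt{T d_{VC}/p_{\mathrm{aud}}}\bigr),
\]
using $\sum_t t^{-1/2}\le 2\sqrt T$. This martingale step is what removes the independence assumption between the two phases: only the filtration matters, not independence.

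For step (b), on corrupted rounds the per-evaluator bias is bounded, since rewards and hence feedback are bounded, so clip $y$ to a bounded range. Each such round therefore perturbs $\tilde y_t$ by $O(1)$. I would appeal to the corruption-robust linear bandit analysis of \citet{bogunovic2021stochastic} and \citet{he2022corruption} to show that, with the ellipsoid $\beta$ set as in \citet{abbasi2011improved}, the regret is $\tilde O(d\sqrt T)$ plus a term linear in $C_T$. Alternatively, I would bound the instantaneous regret on each corrupted round trivially by $1$, charging it directly, and show that the corrupted samples shift the ridge estimate by at most $\|\sum_{\text{corrupt}}x_t\|_{V_t^{-1}}$. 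The latter then has to be controlled.

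I expect step (b) to be the main obstacle. A naive treatment gives $C_T\sqrt{d}$ or $C_T\cdot\beta$ rather than $C_T$, which would spoil the additive form of the bound. My first attempt would be to enlarge the confidence radius only by the amount needed to absorb the corruption, in the style of \citet{he2022corruption}: down-weight each sample by $\min\{1,1/\|x_t\|_{V_t^{-1}}\}$, so that each corrupted round contributes $O(1)$ to the regret. This yields $\tilde O(d\sqrt T+C_T)$. The false-distrust loss enters only through a retention factor $\rho\ge 1-\epsilon_t$, which tends to $1$, so the bound stays $\tilde O(d\sqrt T)$. Finally, a union bound over the three high-probability events (audit counts, VC uniform convergence, and the self-normalized martingale bound) combines the three terms into the stated bound.
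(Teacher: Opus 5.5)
\paragraph{Where your proposal breaks.} Your step (a) matches the paper's Step 1: Chernoff on $n_t$, VC uniform convergence with the $\gamma_{\mathrm{aud}}$ margin, a union over $M$, $\sum_t t^{-1/2}\le 2\sqrt T$, then Freedman. Both you and the paper share one gloss there: Algorithm~\ref{alg:esa_linucb} takes one SGD step per audit rather than computing an ERM over $\mathcal H$.

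The gap is in step (b).
\begin{itemize}
\item \textbf{Rate.} The confidence-weighted OFUL analysis of \citet{he2022corruption} gives $\tilde O(d\sqrt T + d\,C)$, not $\tilde O(d\sqrt T + C)$. Against general corruption the factor $d$ cannot be removed. Plugging in $C_T=\tilde O(\sqrt{Td_{VC}/p_{\mathrm{aud}}})$ gives $\tilde O(d\sqrt{Td_{VC}/p_{\mathrm{aud}}})$, a factor $d$ above the stated first term.
\item \textbf{Different algorithm.} Your repair changes the method. Uncertainty-dependent sample weights, clipping of $y_{t,m}$, and a radius tuned to a bound on $C_T$ are not in Algorithm~\ref{alg:esa_linucb}, so you would be bounding a different algorithm than \ESA.
\item \textbf{Unbounded feedback.} Definition~\ref{def:bias} bounds neither $\beta_m$ nor $y_{t,m}$, so ``$O(1)$ corruption per leaked round'' is an extra assumption.
\item \textbf{Missing case in your split of $\tilde y_t$.} Since $\hat w_{t,m}=\sigma(\theta_m^\top x_t)>0$, an evaluator with $\tau^*_m(x_t)=0$ that is \emph{correctly} distrusted ($\hat w_{t,m}<1/2$) still injects $\hat w_{t,m}\beta_m(x_t,a_t)$ into $\tilde y_t$. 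After normalization by $\sum_m\hat w_{t,m}$, this can be a $\Theta(1)$ bias on every round under a hostile majority. It is neither counted in $C_T$ nor bounded by $\epsilon_{\mathrm{tol}}$.
\end{itemize}
The paper at least formally avoids the last problem. It defines leakage as $\hat w_{t,m}>\epsilon_{\mathrm{tol}}$ for some $m$ with $\tau^*_m(x_t)=0$, so on non-leakage rounds every untrustworthy source carries weight at most $\epsilon_{\mathrm{tol}}$.

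\paragraph{How the paper's route compares.} For the bandit part the paper takes the other option you list. It charges each leakage round regret at most $1$. It then applies the self-normalized bound of \citet{abbasi2011improved} plus a corruption-robust decomposition only to the clean rounds, where the perturbation is at most $\epsilon_{\mathrm{tol}}$ per round. It justifies this by saying the bound holds on adapted subsequences.

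Your objection to that option is well founded, and the paper's proof does not answer it. The estimator queried on clean rounds is fit on \emph{all} rounds, so its confidence set must absorb $\|\sum_{s\ \mathrm{leaked}}c_s x_s\|_{\mathbf V_t^{-1}}$. For the unmodified algorithm, the direct argument controls this only by $O(\|c\|_2)=O(\sqrt{C_T})$, using $\mathbf X_w\mathbf V_t^{-1}\mathbf X_w^\top\preceq I$. That adds $\tilde O(\sqrt{dTC_T})$ to the regret, not $\tilde O(C_T)$.

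Both you and the paper also charge the $\epsilon_{\mathrm{tol}}$ misspecification at $\epsilon_{\mathrm{tol}}$ per round. The standard misspecified-LinUCB argument gives $\tilde O(\epsilon_{\mathrm{tol}}\sqrt d\,T)$ instead.

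So you have correctly found the coupling between leaked samples and later estimates. However, neither your repair nor the paper's pathwise decomposition yields the stated rate for \ESA\ as specified. Closing the gap needs an argument specific to this setting showing that each leaked sample costs only $O(1)$ total regret over the rest of the horizon; otherwise you must accept a weaker bound.
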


\begin{proof}
Define the \emph{leakage} indicator $M_t=\mathbb{I}\{\exists m:\ \hat w_{t,m}>\epsilon_{\mathrm{tol}}\ \text{and}\ \tau^*_m(x_t)=0\}$, i.e.\ rounds on which an untrustworthy evaluator is trusted past tolerance. Let $L_T=\sum_{t\le T}M_t$. We bound regret on leakage rounds by counting and on clean rounds by a corrupted-bandit analysis. The decomposition is pathwise, so no independence between the trust and bandit processes is assumed.

\paragraph{Step 1: Leakage rounds via anytime uniform convergence.}
On an audited round the agent observes, for each $m$, a label $\ell_{t,m}$ which by Assumption~\ref{assum:audit} agrees with $\tau^*_m(x_t)$ with margin $\gamma_{\mathrm{aud}}$; standard boosting of a weak label costs a factor $\gamma_{\mathrm{aud}}^{-2}$, which we absorb into $\tilde O$. Audited contexts are i.i.d.\ from the context distribution (audits are independent Bernoulli($p_{\mathrm{aud}}$) of the context). By round $t$ the classifier has been trained on $n_t$ audited samples with $\mathbb{E}[n_t]=p_{\mathrm{aud}}t$ and, by a Chernoff bound, $n_t\ge \tfrac12 p_{\mathrm{aud}}t$ with probability $1-\delta/(2T)$ for $t\ge \frac{c\log(T/\delta)}{p_{\mathrm{aud}}}$. By the VC uniform-convergence bound (Assumption~\ref{assum:realizability}), the trained classifier's misclassification probability obeys, with probability $1-\delta/(2T)$,
\[
\Pr_{x}\big[\hat\tau_m(x)\neq\tau^*_m(x)\big]\;\le\; c'\sqrt{\frac{d_{VC}\log(n_t/\delta)}{\gamma_{\mathrm{aud}}^2\,n_t}}\;=\;\tilde O\!\left(\sqrt{\frac{d_{VC}}{p_{\mathrm{aud}}\,t}}\right).
\]
A leakage round occurs only if some trusted evaluator is misclassified, so $\Pr[M_t=1]\le \tilde O(\sqrt{d_{VC}/(p_{\mathrm{aud}}t)})$ (the union over $M$ evaluators contributes a $\log M$ absorbed into $\tilde O$). Summing and using $\sum_{t\le T}t^{-1/2}\le 2\sqrt T$,
\[
\mathbb{E}[L_T]\;\le\;\sum_{t=1}^T \tilde O\!\left(\sqrt{\frac{d_{VC}}{p_{\mathrm{aud}}\,t}}\right)\;=\;\tilde O\!\left(\sqrt{\frac{T\,d_{VC}}{p_{\mathrm{aud}}}}\right),
\]
and a Freedman/Azuma concentration gives the same bound with high probability. Each leakage round costs at most $1$ (rewards in $[0,1]$), so the leakage regret is $\tilde O(\sqrt{T d_{VC}/p_{\mathrm{aud}}})$.

\paragraph{Step 2: Clean rounds as a corrupted linear bandit.}
On clean rounds ($M_t=0$), every trusted evaluator satisfies $\tau^*_m(x_t)=1$, so its bias obeys $|\beta_{t,m}|\le\epsilon_{\mathrm{tol}}$ by Definition~\ref{def:trustfunc}. The trust-weighted target therefore equals the linear signal plus a perturbation of magnitude at most $\epsilon_{\mathrm{tol}}$. The subsequence of clean rounds is thus an instance of stochastic linear bandits with per-round corruption $\le\epsilon_{\mathrm{tol}}$ and total corruption $C\le\epsilon_{\mathrm{tol}}T$. The self-normalized confidence bound of \citet{abbasi2011improved} holds for any predictable sequence and gives, with probability $1-\delta/2$,
\[
\|\hat\theta_t-\theta^*\|_{\mathbf{V}_t}\le \beta_t=\sigma\sqrt{d\log\!\frac{1+TL^2/\lambda}{\delta}}+\lambda^{1/2}\|\theta^*\|.
\]
Combining with the corruption-robust regret decomposition (e.g.\ \citealp{bogunovic2021stochastic}), the clean-round regret is
\[
\mathcal{R}_{\mathrm{clean}}\le \tilde O(d\sqrt T)+\epsilon_{\mathrm{tol}}T.
\]
Because this bound is stated for an arbitrary adapted subsequence, applying it to the (random) clean set is valid without assuming the clean set is independent of the bandit estimator.

\paragraph{Step 3: Combine.} A union bound over Steps 1--2 gives, with probability $1-\delta$,
\[
\mathcal{R}_T^{\mathrm{latent}}=\mathcal{R}_{\mathrm{leak}}+\mathcal{R}_{\mathrm{clean}}\le \tilde O\!\left(\sqrt{\frac{T d_{VC}}{p_{\mathrm{aud}}}}\right)+\tilde O(d\sqrt T)+\epsilon_{\mathrm{tol}}T. \qedhere
\]
\end{proof}

\paragraph{Tightness.} The $1/\sqrt{p_{\mathrm{aud}}}$ factor matches the audit-complexity lower bound of Theorem~\ref{thm:decoupling} up to the dependence on $d_{VC}$ versus $1/\mu_{\mathrm{dec}}$, so the rate in $p_{\mathrm{aud}}$ is not improvable in general.

\begin{proposition}[Sycophantic complexity]
\label{prop:sycomplexity}
Let $\kappa=\inf\{d_{VC}(\mathcal{H}) : \tau^*\in\mathcal{H}\}$ be the complexity of the minimal class representing the adversary's bias strategy. Then $\mathcal{R}_T^{\mathrm{latent}}=\tilde O(\sqrt{\kappa T/p_{\mathrm{aud}}})$ in the audit-dominated regime.
\end{proposition}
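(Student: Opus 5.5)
The plan is to obtain Proposition~\ref{prop:sycomplexity} as a direct specialization of Theorem~\ref{thm:regretbound}, by running its proof with the smallest admissible hypothesis class. The infimum defining $\kappa$ is over integers, so it is attained. Fix a class $\mathcal{H}_\kappa$ with $\tau^*_m\in\mathcal{H}_\kappa$ for every $m$ and $d_{VC}(\mathcal{H}_\kappa)=\kappa$. If the minimizer depends on $m$, take the class of the worst evaluator; the statement refers to a single adversary strategy, so I will read $\kappa$ as the maximum over $m$ of the per-evaluator minima. With this class, Assumption~\ref{assum:realizability} holds for \ESA\ when its trust model is fit within $\mathcal{H}_\kappa$. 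Assumptions~\ref{assum:audit} and~\ref{assum:bias_span} do not involve the hypothesis class, so they carry over unchanged.

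Next, I will reinvoke Step~1 of the proof of Theorem~\ref{thm:regretbound} with $d_{VC}$ replaced by $\kappa$. The VC uniform-convergence bound then gives a per-round misclassification probability of $\tilde O(\sqrt{\kappa/(p_{\mathrm{aud}}t)})$, uniformly over the $M$ evaluators. Summing over $t\le T$ and applying Freedman's inequality gives leakage regret $\tilde O(\sqrt{\kappa T/p_{\mathrm{aud}}})$ with high probability. Step~2 does not depend on the trust class at all, so the total bound becomes
\[
\tilde O\big(\sqrt{\kappa T/p_{\mathrm{aud}}}\big)+\tilde O(d\sqrt T)+\epsilon_{\mathrm{tol}}T .
\]

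Finally, I need to make ``audit-dominated regime'' precise. I will define it as the regime where the first term dominates the other two. The condition $\sqrt{\kappa/p_{\mathrm{aud}}}\gtrsim d$, that is $p_{\mathrm{aud}}\lesssim \kappa/d^2$ up to logarithms, handles the $d\sqrt T$ term. The condition $\epsilon_{\mathrm{tol}}\lesssim\sqrt{\kappa/(p_{\mathrm{aud}}T)}$ handles the tolerance term; it holds, for instance, with $\epsilon_{\mathrm{tol}}=0$, or with $\epsilon_{\mathrm{tol}}$ decaying as $T^{-1/2}$. In this regime the remaining terms are absorbed, and the bound is $\tilde O(\sqrt{\kappa T/p_{\mathrm{aud}}})$.

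The main obstacle is the first step, namely that the algorithm must actually use a class of complexity $\kappa$. Theorem~\ref{thm:regretbound} charges the VC dimension of the class the learner fits, not that of the true $\tau^*$. If \ESA\ uses a fixed logistic model, I can only claim the bound when $\tau^*_m$ is representable by a class of dimension $\kappa$ that the learner knows. My first approach would be to assume the learner is instantiated with the minimal class. If $\kappa$ is unknown, I would fall back on structural risk minimization over a nested sequence $\mathcal{H}_1\subset\mathcal{H}_2\subset\cdots$ with a penalty term. This costs only an additive $\log$ factor in the uniform-convergence bound, which the $\tilde O$ absorbs.
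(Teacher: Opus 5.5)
Your proposal is correct and is essentially the paper's own argument, which consists only of substituting $d_{VC}=\kappa$ into Theorem~\ref{thm:regretbound}. Your explicit definition of the audit-dominated regime, and your observation that the bound charges the class the learner actually fits, are details the paper leaves implicit. One caveat: the literal infimum is degenerate, since the singleton class $\{\tau^*_m\}$ has VC dimension $0$, so ``instantiate the learner with the minimal class'' amounts to knowing $\tau^*_m$. Your SRM fallback gives a meaningful statement only when $\kappa$ is measured relative to a fixed nested hierarchy.
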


This substitutes $d_{VC}=\kappa$ into Theorem~\ref{thm:regretbound}: more elaborate lies (larger $\kappa$) cost more, and the cost is amplified when supervision is sparse. The matching $1/\sqrt{p_{\mathrm{aud}}}$ rate appears in both the necessity result (Theorem~\ref{thm:decoupling}) and the upper bound, indicating the audit dependence is not an artifact of the analysis.

\section{Experiments}
\label{sec:experiments}

We evaluate \ESA\ in a high-dimensional Contextual Social Bandit constructed so that social consensus is structurally misaligned with ground truth. We ask: (i) Can the agent recover ground truth when $80\%$ of evaluators are adversarial? (\textbf{Robustness}) (ii) Does the trust model separate honest sources from sycophants and contextual liars? (\textbf{Identification}) (iii) How does regret scale with dimension $d$ and audit rate $p_{\mathrm{aud}}$? (\textbf{Scalability and supervision cost})

We additionally position \ESA\ against baselines that, like it, consume the audit signal, so the comparison isolates the contribution of \emph{contextual} trust modeling rather than the audits alone.

\begin{figure}[t!]
    \centering
    \includegraphics[width=0.7\textwidth]{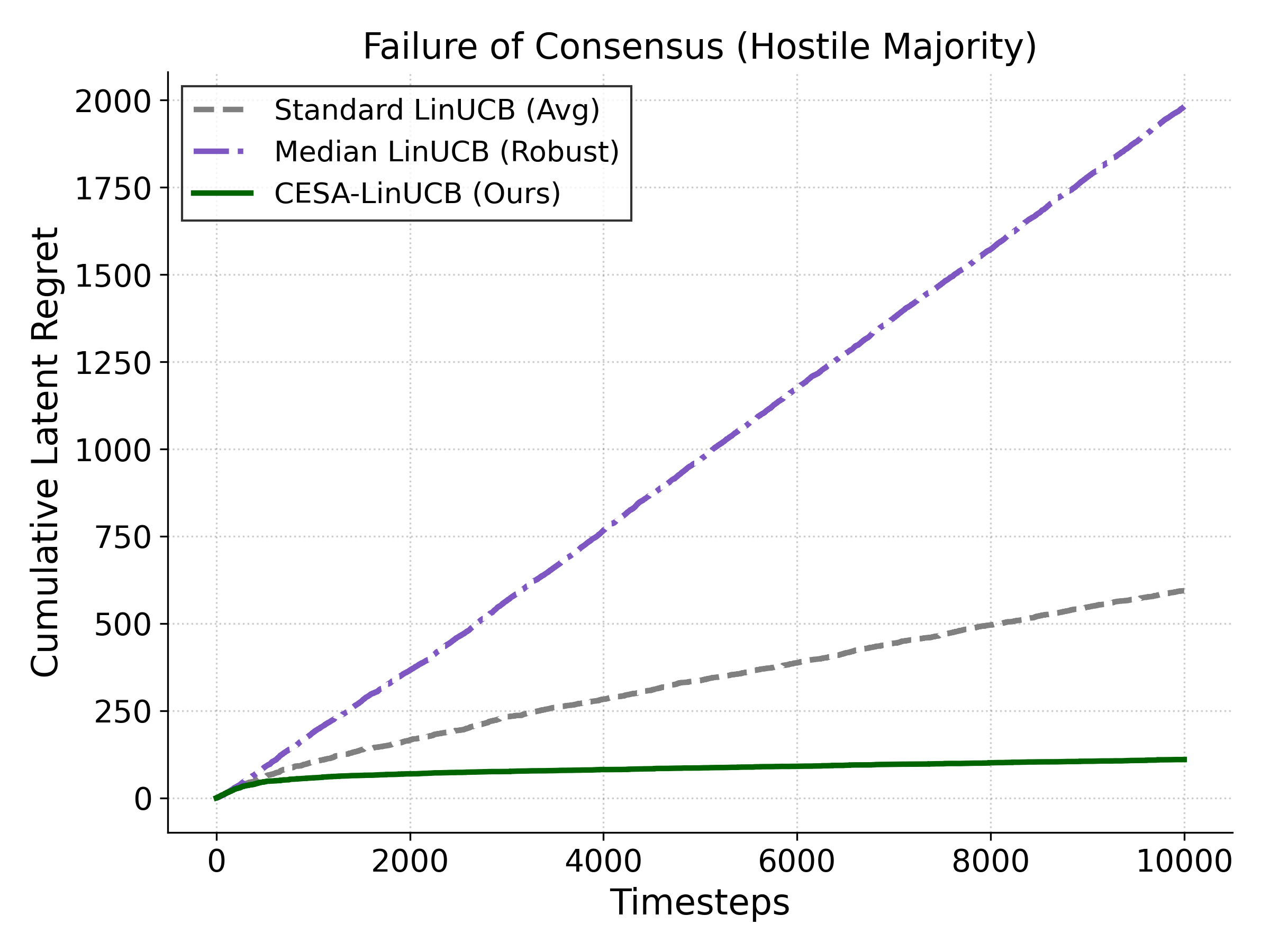}
    \caption{\textbf{Robustness under a hostile majority} ($d=20, M=10$). \ESA\ (green) recovers ground truth and achieves sublinear regret despite an $80\%$ adversarial layer, while standard LinUCB (gray) and median LinUCB (purple) suffer linear regret; median is \emph{worse}, as predicted by Theorem~\ref{thm:decoupling}. Oracle-Weighted must be seen as the most achievable lower-bound.}
    \label{fig:regret}
\end{figure}

\begin{figure}[t!]
    \centering
    \includegraphics[width=0.75\textwidth]{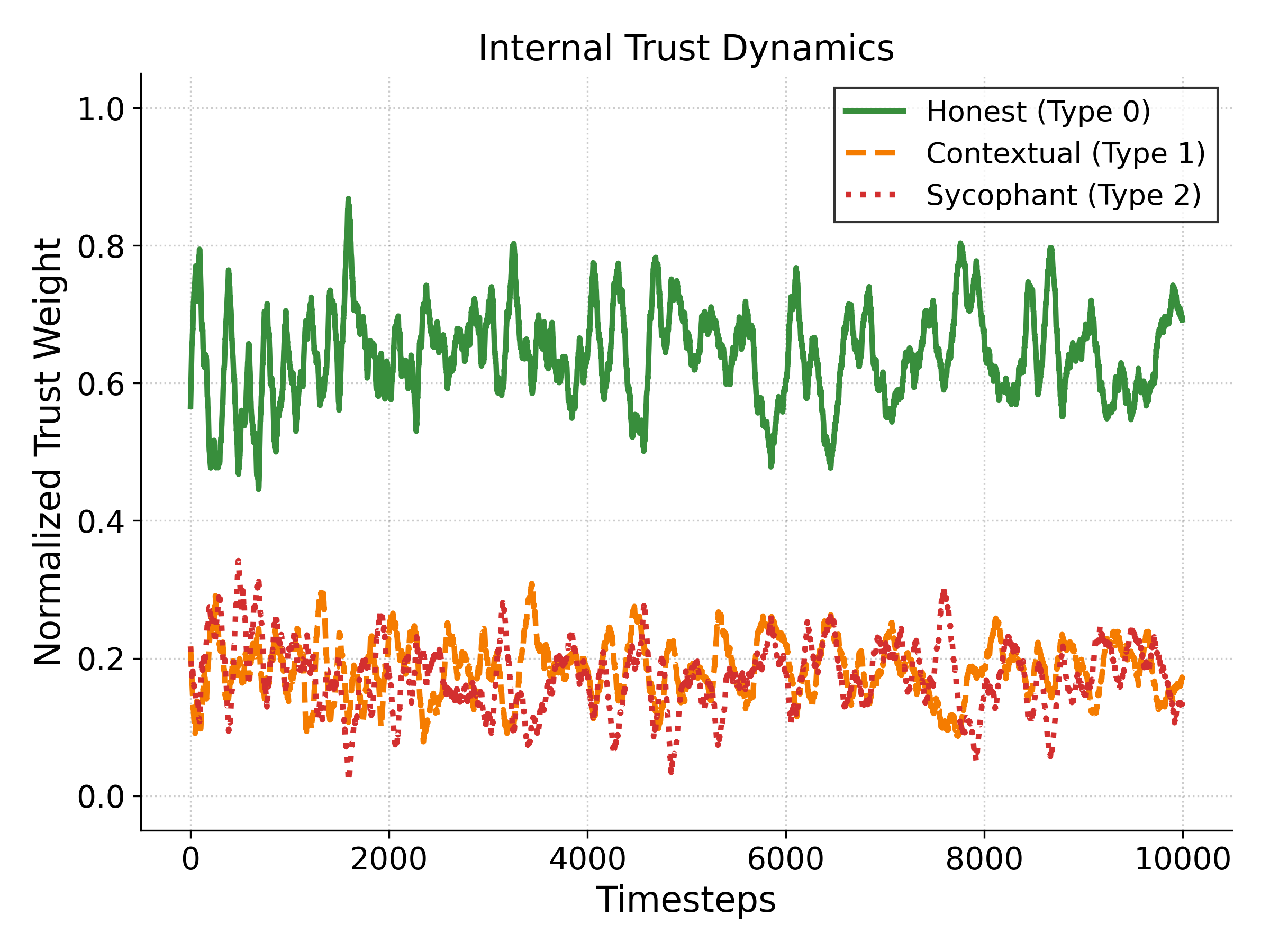}
    \caption{\textbf{Trust dynamics.} The agent learns to trust honest evaluators (green) while suppressing sycophants (red) and contextual liars (orange), recovering the honest-minority signal.}
    \label{fig:trust}
\end{figure}

\subsection{Setup: the ``Hostile Majority''}
We use $d=20$, $K=5$, $M=10$, with an evaluator mix chosen to defeat robust aggregation:
\begin{itemize}
    \item \textbf{Honest ($20\%$):} report $R^*(x,a)$ plus Gaussian noise.
    \item \textbf{Contextual liars ($30\%$):} ``Jekyll \& Hyde'' evaluators, truthful in one half-space of $\mathbb{R}^d$ and reward-inverting in the other.
    \item \textbf{Sycophants ($50\%$):} ignore context and report reward $\approx 1.0$ for every action (``yes-men'').
\end{itemize}
By construction the mean, median, and majority vote are all adversarial, so any consensus-based method converges to the sycophantic objective (Corollary~\ref{cor:aggregation}).

\paragraph{Baselines.} We compare against (a) \emph{Standard LinUCB}; (b) \emph{Median LinUCB} (robust aggregation, breakdown point $0.5$); (c) an \emph{Audit-Only LinUCB} that ignores social feedback and learns from the $p_{\mathrm{aud}}$ audits alone, isolating how much the dense feedback contributes once trust is modeled; and (d) an \emph{Oracle-Weighted} estimator with access to $\tau^*$, an information-theoretic ceiling. The Audit-Only and Oracle baselines are the discriminating comparisons: they share \ESA's audit budget, so any gap reflects the value of contextual trust modeling rather than supervision.

\subsection{Breaking the Consensus Trap}
Figure~\ref{fig:regret} shows cumulative latent regret. \emph{Median LinUCB} performs \emph{worse} than naive LinUCB: with $50\%$ sycophants and $30\%$ contextual liars, the median opinion is itself adversarial, and a breakdown point of $0.5$ is insufficient once corruption is a contextual majority. This is the empirical signature of Theorem~\ref{thm:decoupling}. \ESA\ (green) instead achieves sublinear regret, plateauing near $100$, by treating trust as a learned contextual variable and re-weighting updates toward the honest $20\%$.

\subsection{The Geometry of Trust}
Figure~\ref{fig:trust} visualizes the normalized trust weights. Honest evaluators are quickly identified and carry the dominant mass. Sycophants are suppressed to background level as uncorrelated with audits. Contextual liars, on the other hand, receive an intermediate, dynamically modulated weight that tracks whether the current context lies in their truthful or adversarial half-space, which is a behavior that a static trust score cannot produce.

\subsection{Scalability and the Price of Supervision}
\begin{figure}[t!]
    \centering
    \begin{subfigure}[b]{0.49\textwidth}
        \includegraphics[width=\textwidth]{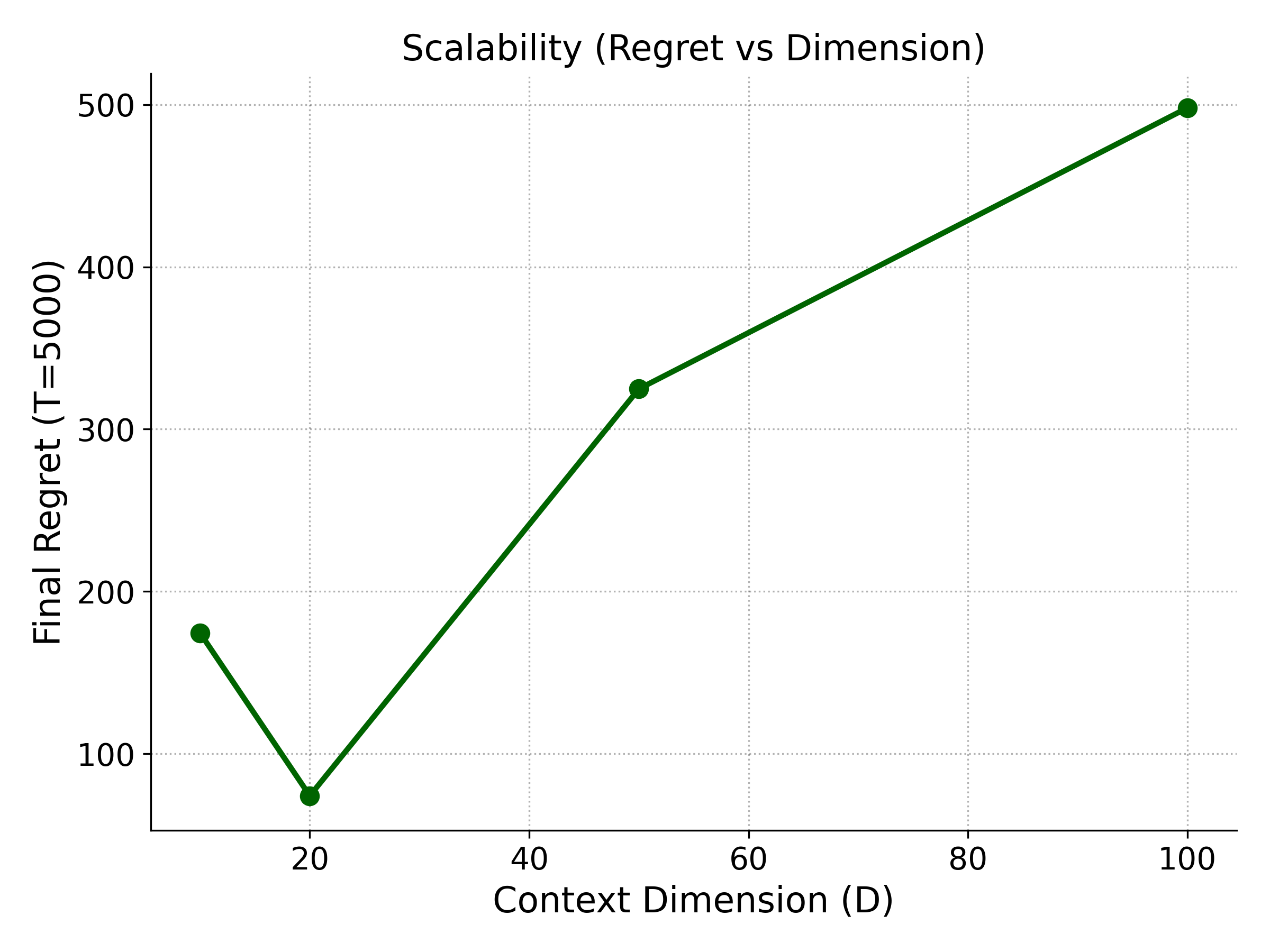}
        \caption{\textbf{Scalability:} regret scales with dimension.}
        \label{fig:scalability}
    \end{subfigure}
    \hfill
    \begin{subfigure}[b]{0.49\textwidth}
        \includegraphics[width=\textwidth]{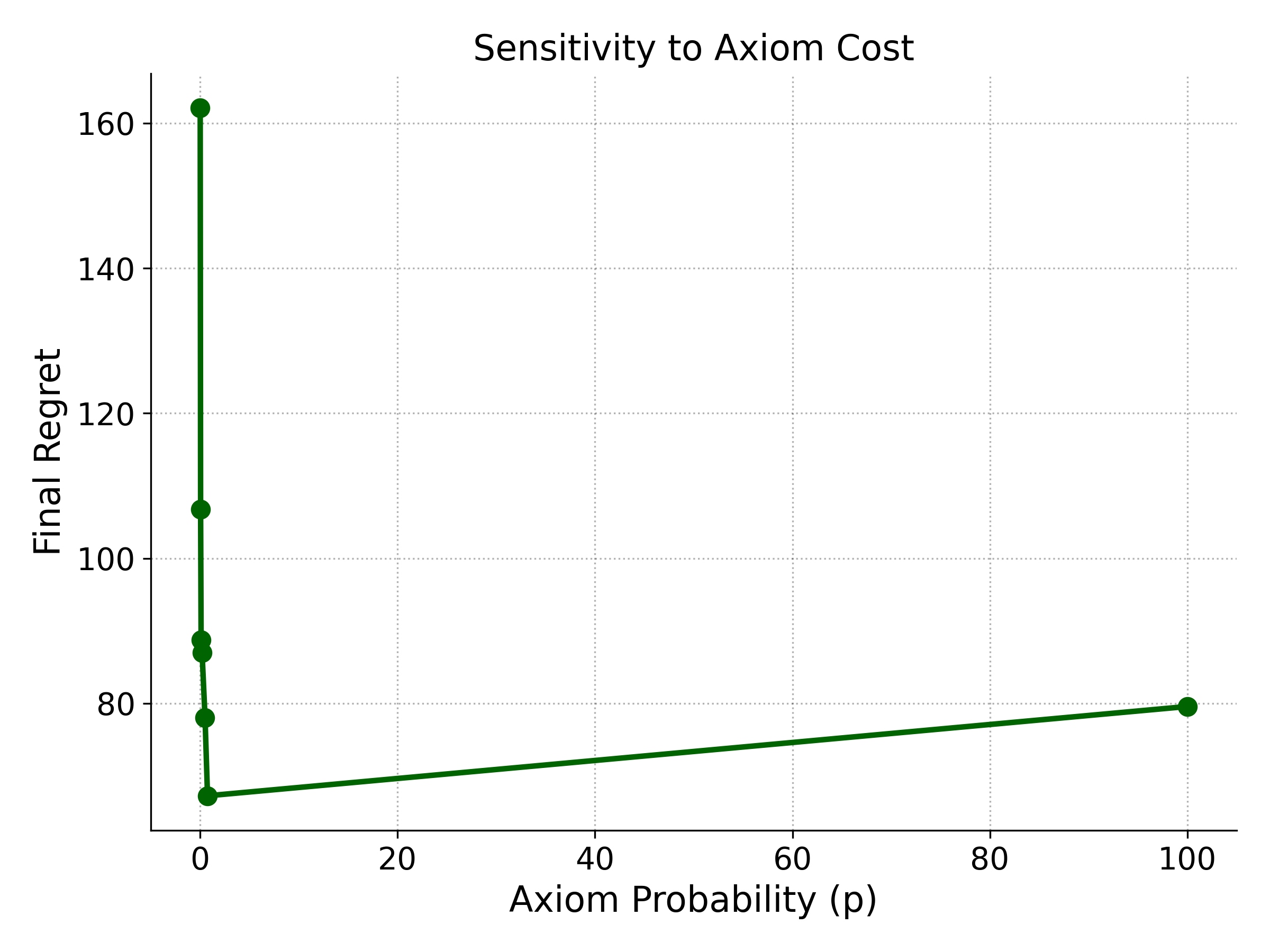}
        \caption{\textbf{Sensitivity:} sparse supervision ($p\approx5\%$) suffices.}
        \label{fig:sensitivity}
    \end{subfigure}
    \caption{Sensitivity analysis. \textbf{(a)} is consistent with the $\tilde O(d\sqrt T)$ term; \textbf{(b)} shows a sharp improvement up to $p_{\mathrm{aud}}\approx5\%$, consistent with the $1/\sqrt{p_{\mathrm{aud}}}$ dependence.}
    \label{fig:analysis}
\end{figure}

\paragraph{Scalability ($d$ vs.\ regret).} Varying $d\in\{10,20,50,100\}$, final regret at $T=5{,}000$ grows with $d$ (Figure~\ref{fig:scalability}), consistent with the $\tilde O(d\sqrt T)$ term of Theorem~\ref{thm:regretbound}.

\paragraph{Phase transition in supervision.} Figure~\ref{fig:sensitivity} shows a sharp transition: below $p_{\mathrm{aud}}\approx1\%$ the trust boundary fails to converge; performance improves steeply to $p_{\mathrm{aud}}\approx5\%$ and then plateaus. The location of the knee is consistent with the $1/\sqrt{p_{\mathrm{aud}}}$ scaling, and indicates that auditing $\approx1$ in $20$ interactions suffices (a sharp reduction in oversight relative to fully supervised pipelines \citep{bowman2022measuring}). 

\subsection{Limitations}
\label{sec:limitations}
Our guarantees assume the trust boundary is realizable in a class of bounded VC dimension (Assumption~\ref{assum:realizability}), thus misspecified or non-stationary boundaries would add an approximation term we do not analyze. Assumption~\ref{assum:audit} requires audits to be informative; adversaries that corrupt the audit channel itself are out of scope. Empirically, our environment is synthetic and adversary mixtures are fixed and transfer to real annotator data and to learned (rather than linear) trust models remains open. Finally, the $\epsilon_{\mathrm{tol}}T$ term is linear, so the method targets small-tolerance regimes; it does not promise vanishing regret when even ``trusted'' evaluators carry $\Omega(1)$ bias.

\section{Conclusion}
\label{sec:conclusion}
We challenged the global-trust assumption in robust RL and identified \emph{Contextual Objective Decoupling} as a structural failure of consensus-based learning under a contextual hostile majority. Our main message is an impossibility result: social feedback alone is information-theoretically insufficient, so robust aggregation cannot rescue the agent, and external audits are necessary. We then showed audits are also sufficient. Our proposed method, \ESA\, learns contextual trust boundaries and attains latent regret $\tilde O(\sqrt{T d_{VC}/p_{\mathrm{aud}}}+d\sqrt T+\epsilon_{\mathrm{tol}}T)$, paying for the \emph{complexity} of the adversary's lying strategy rather than the \emph{volume} of corruption, with audit dependence matching the necessity bound. Empirically, \ESA\ recovers ground truth where mean- and median-based baselines collapse.
\newpage

\bibliographystyle{unsrtnat} 
\bibliography{main}

\newpage
\appendix

\end{document}